\documentclass[letterpaper, 10 pt, conference]{ieeeconf} % DO NOT CHANGE THIS
\IEEEoverridecommandlockouts
\usepackage{times}  % DO NOT CHANGE THIS
\usepackage{helvet}  % DO NOT CHANGE THIS
\usepackage{courier}  % DO NOT CHANGE THIS
\usepackage[hyphens]{url}  % DO NOT CHANGE THIS
\usepackage{graphicx} % DO NOT CHANGE THIS
\usepackage{caption} % DO NOT CHANGE THIS AND DO NOT ADD ANY OPTIONS TO IT
\usepackage{algorithm}
\usepackage{algorithmic}

\usepackage{newfloat}
\usepackage{listings}
\DeclareCaptionStyle{ruled}{labelfont=normalfont,labelsep=colon,strut=off} % DO NOT CHANGE THIS
\floatstyle{ruled}
\newfloat{listing}{tb}{lst}{}
\floatname{listing}{Listing}
 \usepackage{amsmath}
 \usepackage{amssymb}
  
 \usepackage{amsthm}
\usepackage{booktabs}
\usepackage{multirow}
\usepackage{multicol}
\usepackage{subfigure}
\usepackage{xcolor}
\usepackage{tabularx}

\newcommand{\citet}[1]{\cite{#1}} %natbibs's citet clashes with IEEE?

\newtheorem{theorem}{Theorem}%[section]
\newtheorem{lemma}[theorem]{Lemma}
\theoremstyle{remark}
\newtheorem*{remark}{Remark}

\newcommand{\atcresults}[0]{
\begin{figure}[!t]
    \centering
    \includegraphics[width = \columnwidth]{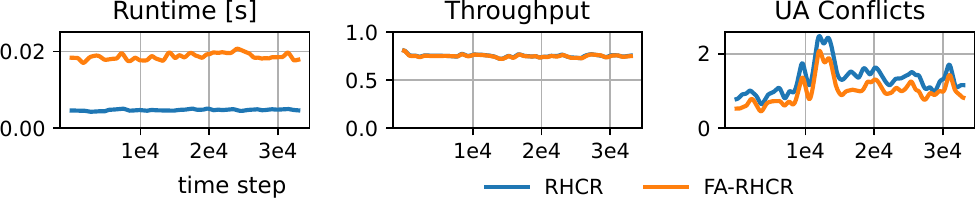}
    \caption{Results of running RHCR on the ATC-map. x-axis is the number of steps, y-axis the respective metric value. RHCR runs for 33000 timesteps with 40 MAPF agents.}
    \label{fig:atc_results}
\end{figure}
}

\newcommand{\numagentcomp}[0]{
\begin{figure}[!t]
    \centering
    \includegraphics[width = \columnwidth]{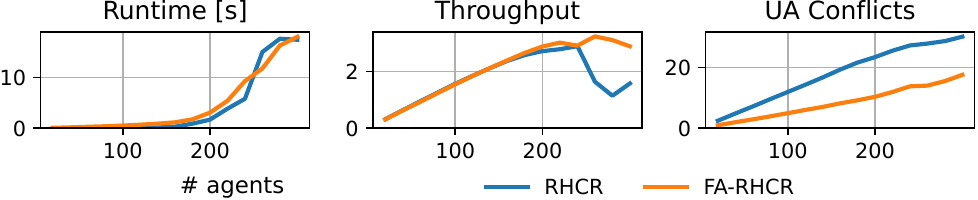}
    \caption{Results of running RHCR on the \textit{den312d}-map. x-axis is the number of MAPF agents, y-axis the respective metric value. UA movement type is \textit{directed}.}
    \label{fig:num_agent_comp}
\end{figure}
}

\newcommand{\windowscomp}[0]{
\begin{figure}[!t]
    \centering
    \includegraphics[width = \columnwidth]{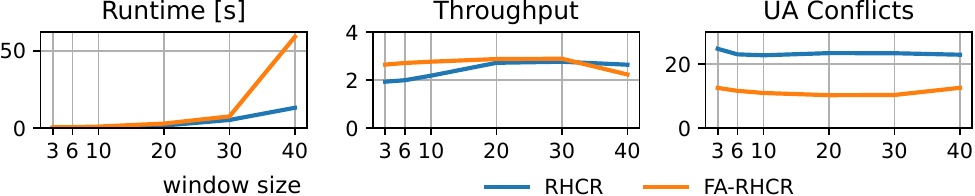}
    \caption{Results of running RHCR with 200 agents on the \textit{den312d}-map. x-axis is the replanning window size of RHCR, y-axis the respective metric value. UA movement type is \textit{directed}.}
    \label{fig:windows_comp}
\end{figure}
}

\newcommand{\warehouse}[0]{
\begin{figure}[!t]
    \centering
    \includegraphics[width = \columnwidth]{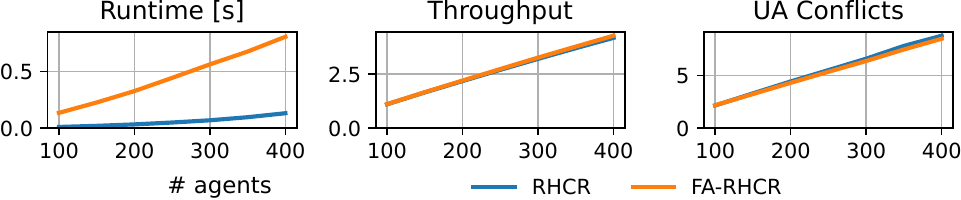}
    \caption{Results of running RHCR on the \textit{warehouse-10-20-10-2-1}-map. x-axis is the number of agents, y-axis the respective metric value. UA movement type is \textit{directed}.}
    \label{fig:warehouse}
\end{figure}
}

\newcommand{\warehousecc}[0]{
\begin{figure}[!t]
    \centering
    \includegraphics[width = \columnwidth]{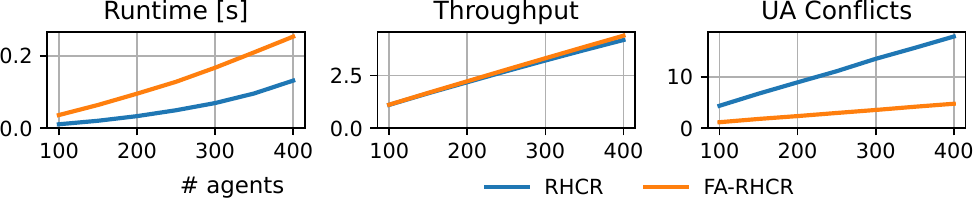}
    \caption{Results of running RHCR on the \textit{warehouse-10-20-10-2-1}-map. x-axis is the number of agents, y-axis the respective metric value. UA movement type is \textit{directed} but enforces highway patters \cite{li2023study}.}
    \label{fig:warehouse_crisscross}
\end{figure}
}

\newcommand{\oneshot}[0]{
\begin{figure*}[!t]
    \centering
    \includegraphics[width = 2\columnwidth]{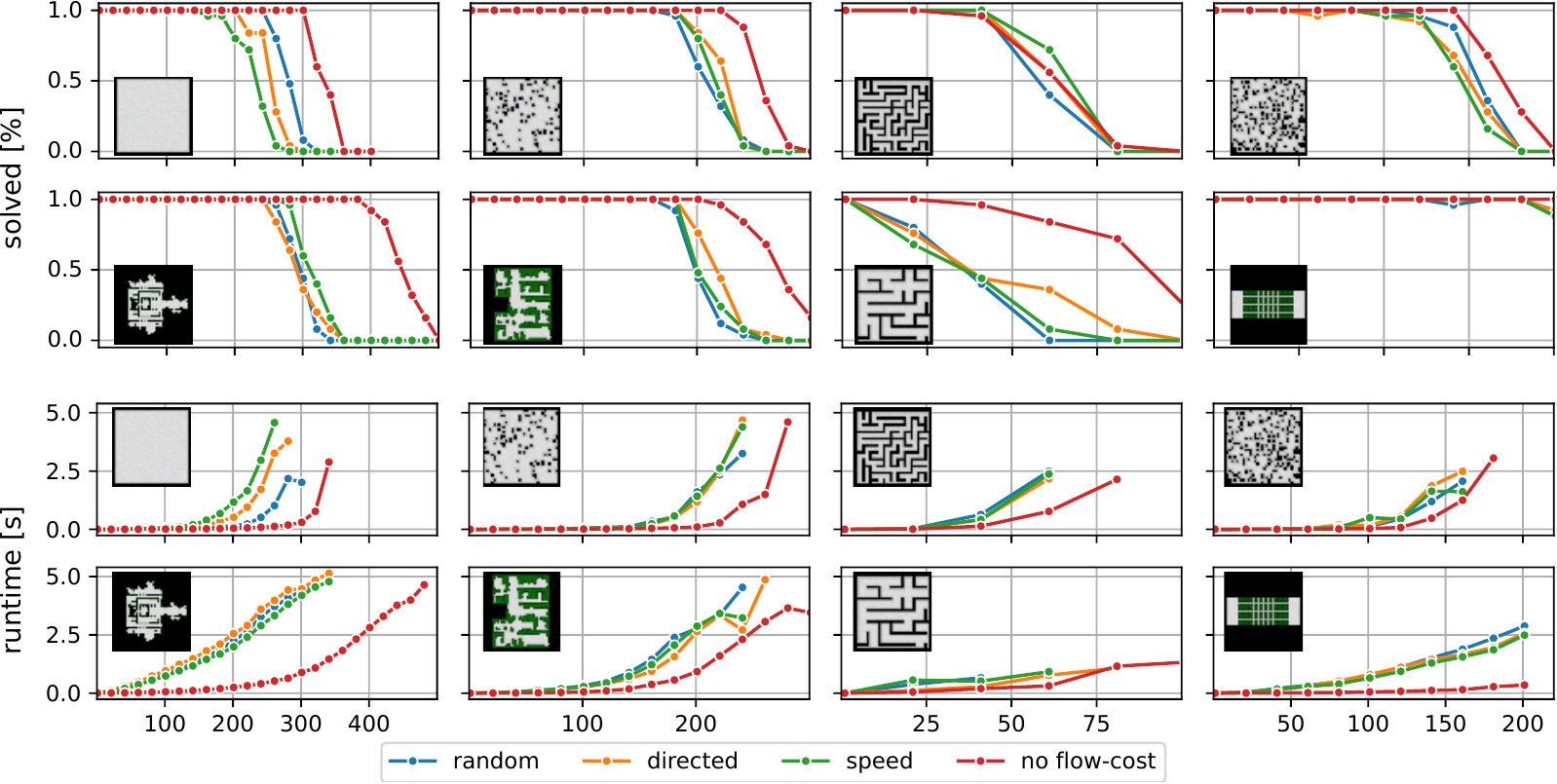}
    \caption{Result of one-shot EECBS on the different benchmark maps (see Table~\ref{table:maptable} for the map names), for different UA movement types (as described in Section~\ref{sec:evaluation}). x-axis is the number of MAPF agents. %y-axis number of instances solved (out of 25) or the runtime respectively. 
    y-axis shows either the percentage (\textbf{top}) of instances solved (out of 25) or the average runtime (\textbf{bottom}). The runtime cutoff is 5 seconds.}
    \label{fig:oneshot}
\end{figure*}
}

\newcommand{\modcost}[0]{
\begin{figure}[!t]
    \centering
    \includegraphics[width = \columnwidth]{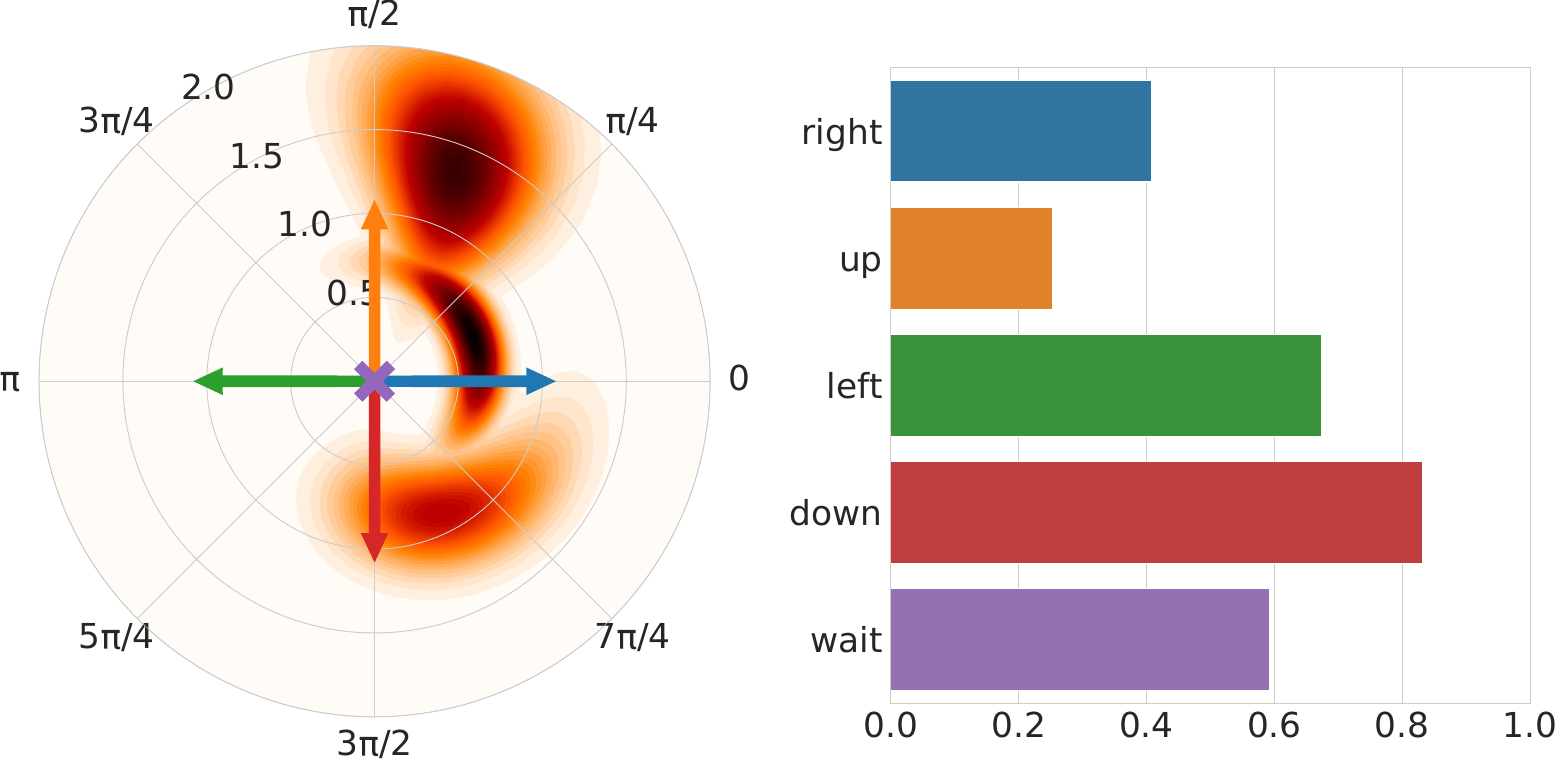}
    \caption{\textbf{Left:} A visualization of an SWGMM, obtained from an MoD, that represents the likelihood of UA movement for a specific location in a map. The colored arrows represent the set of possible actions of a MAPF agent. \textbf{Right:} Flow cost calculated using Eq. \ref{eq:malahanobis_distance} representing the \textit{distance} from an action to the SWGMM.}
    \label{fig:mod_cost}
\end{figure}
}

\newcommand{\modareas}[0]{
\begin{figure}[!t]
    \centering
    \includegraphics[width = \columnwidth]{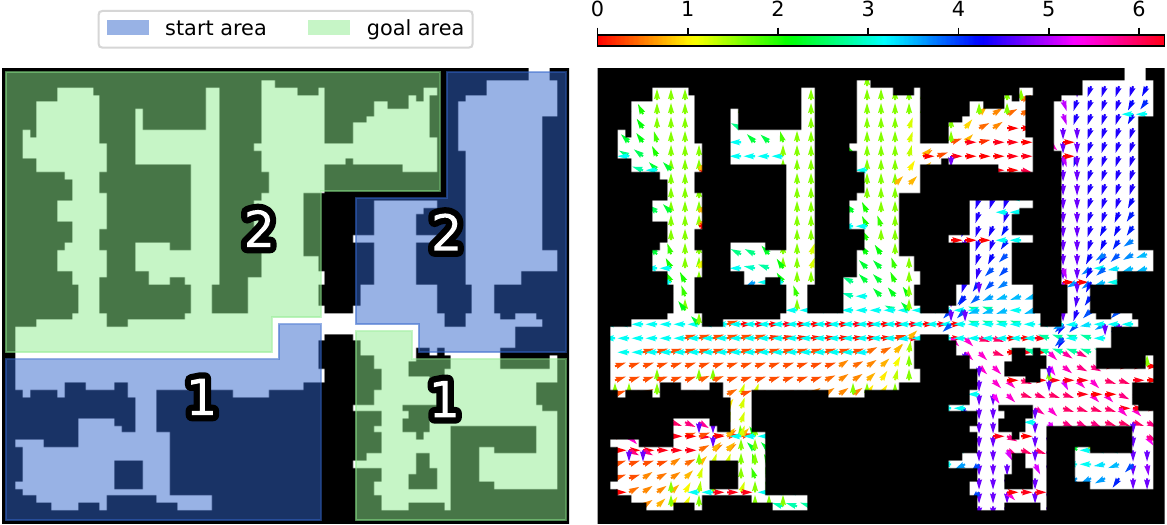}
    \caption{\textbf{Left:} It shows the areas from which the start and goal points are sampled for \textit{directed} UA movement. The numbers indicate the corresponding areas. \textbf{Right:} The corresponding CLiFF-map of dynamics, with the arrows representing the means of the SWGMM mixture components, and color indicates their direction in radians.}
    \label{fig:mod_areas}
\end{figure}
}

\newcommand{\directioncost}[0]{
\begin{figure}[!t]
    \centering
    \includegraphics[width = \columnwidth]{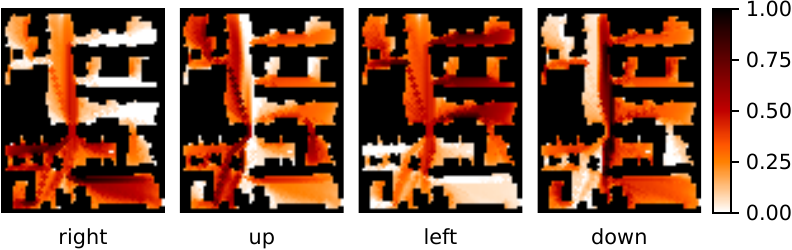}
    \caption{The cost for an edge transition for every location in the \textit{den312d}-map, for each move action respectively. It is computed using the MoD in Figure \ref{fig:mod_areas} and shows how FA-MAPF translates motion patterns into a semantic cost-landscape for the MAPF algorithm.}
    \label{fig:direction_cost}
\end{figure}
}

\newcommand{\maptable}[0]{
\begin{table*}[t]
\small
\centering
\begin{tabularx}{\textwidth}{lll|ll|ll|ll}
\toprule
 & &  & \multicolumn{2}{c}{UA Conflicts} & \multicolumn{2}{c}{Runtime [s]} & \multicolumn{2}{c}{Throughput} \\
 & & MAPF method & RHCR & FA-RHCR & RHCR & FA-RHCR & RHCR & FA-RHCR \\
& map name & movement type &  &  &  &  &  &  \\
\midrule
\multirow[b!]{2}{*}{\includegraphics[width=1cm]{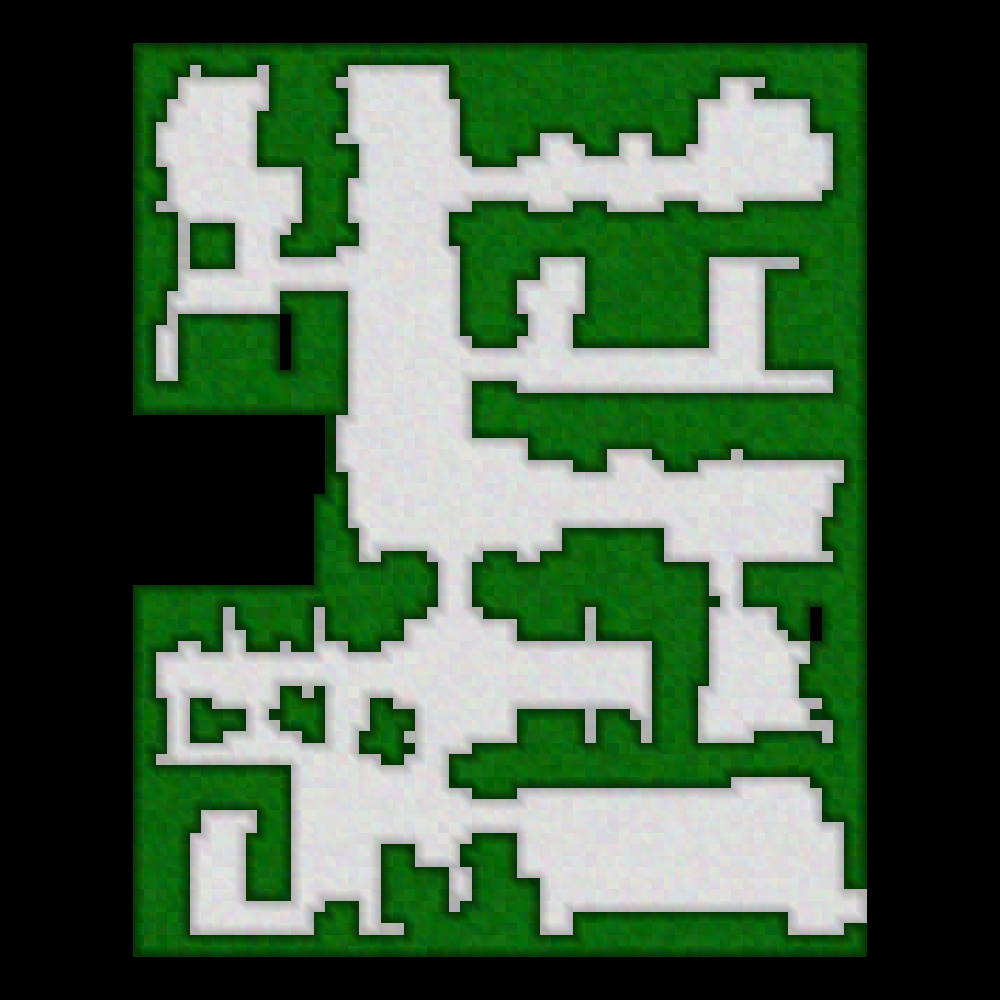}}& den312d & directed & 23.47~$\pm$~3.04 & \textbf{10.52~$\pm$~1.43} & \textbf{1.68~$\pm$~1.74} & 3.22~$\pm$~2.95 & 2.73 & \textbf{2.89} \\
& \multirow[t]{2}{*}{\# agents: 200} & random & 19.97~$\pm$~2.37 & \textbf{15.30~$\pm$~1.97} & 1.68~$\pm$~1.74 & \textbf{1.07~$\pm$~0.64} & 2.73 & \textbf{2.83} \\
&  & speed & 21.55~$\pm$~3.17 & \textbf{15.44~$\pm$~2.31} & \textbf{1.68~$\pm$~1.74} & 7.47~$\pm$~3.82 & \textbf{2.73} & 2.57 \\
\midrule
\multirow[b!]{2}{*}{\includegraphics[width=1cm]{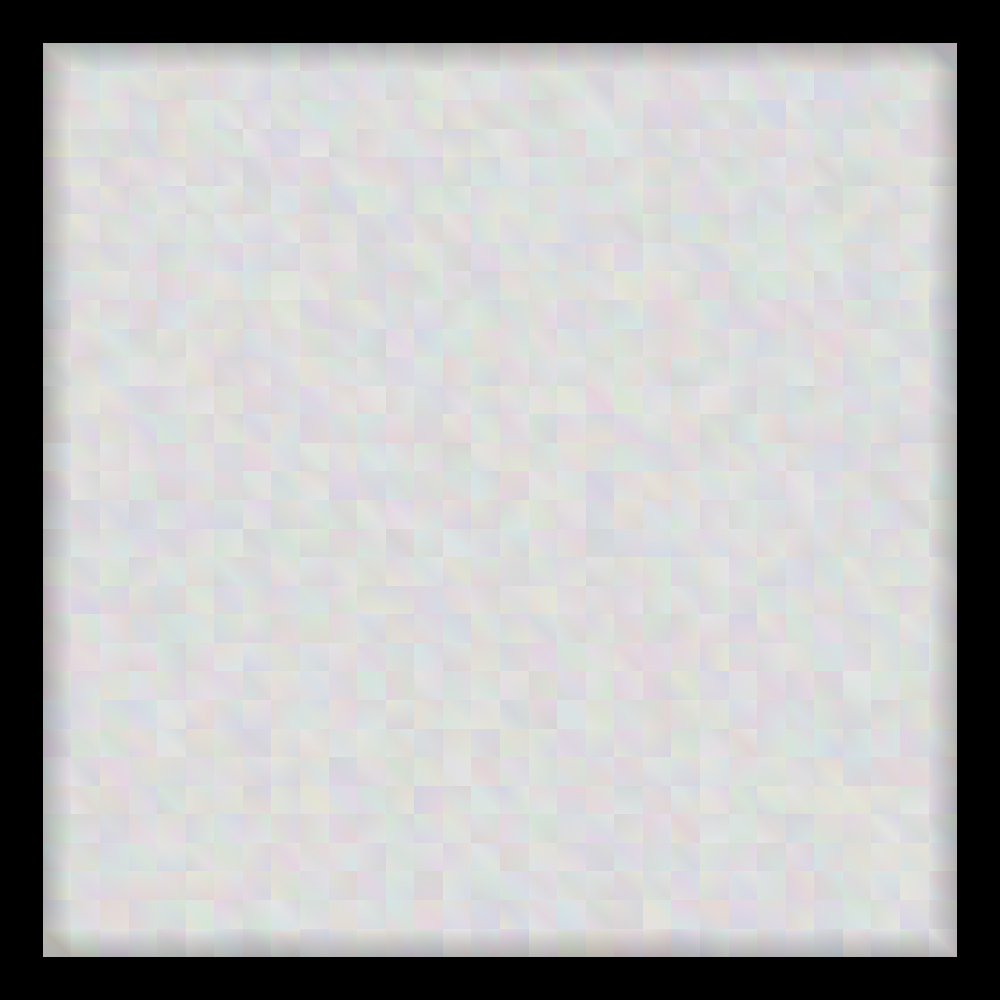}}& empty-32-32 & directed & 10.65~$\pm$~0.92 & \textbf{6.82~$\pm$~0.81} & \textbf{0.37~$\pm$~0.07} & 1.87~$\pm$~0.59 & \textbf{8.28} & 8.27 \\
& \multirow[t]{2}{*}{\# agents: 300} & random & 13.93~$\pm$~1.02 & \textbf{11.99~$\pm$~0.97} & \textbf{0.38~$\pm$~0.08} & 0.52~$\pm$~0.05 & 8.28 & \textbf{8.44} \\
&  & speed & 9.50~$\pm$~0.82 & \textbf{6.52~$\pm$~0.70} & \textbf{0.37~$\pm$~0.07} & 3.73~$\pm$~1.79 & \textbf{8.28} & 8.19 \\
\midrule
\multirow[b!]{2}{*}{\includegraphics[width=1cm]{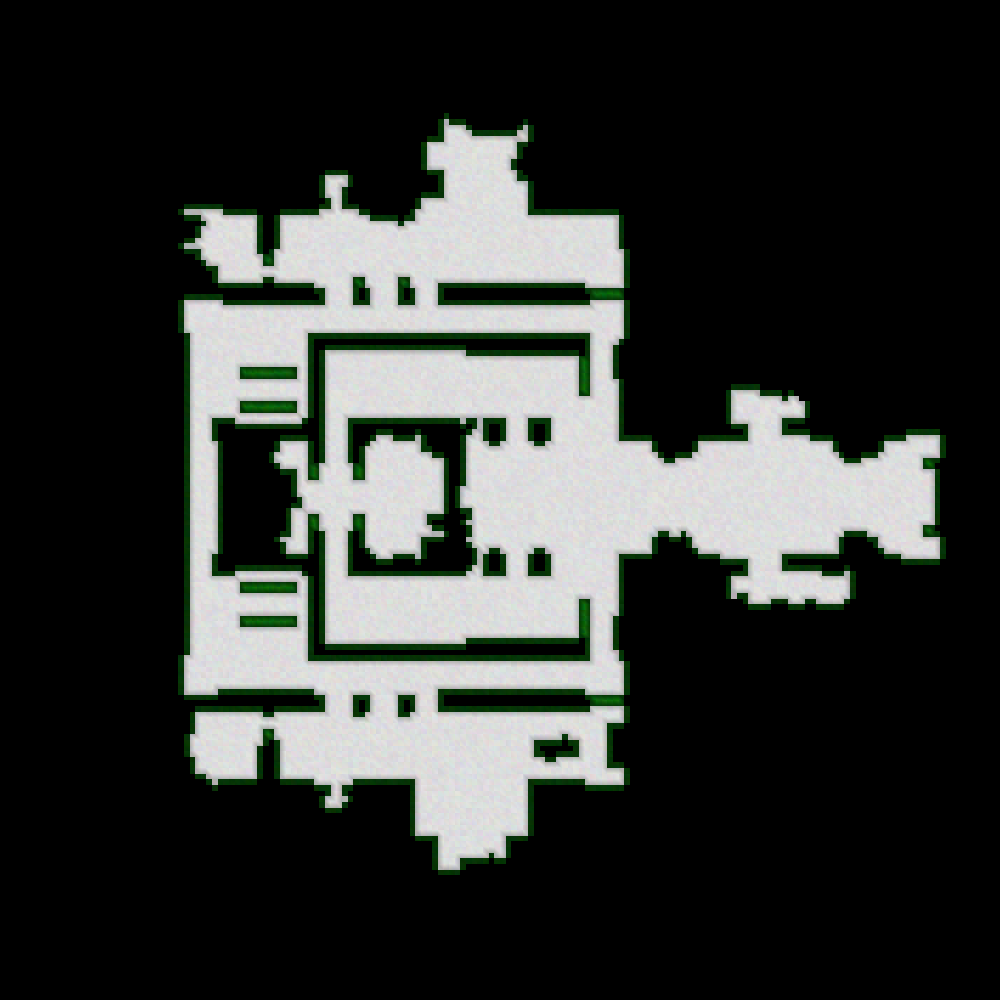}}& ht\_chantry & directed & 21.15~$\pm$~2.86 & \textbf{9.77~$\pm$~1.32} & \textbf{0.60~$\pm$~0.34} & 5.28~$\pm$~2.06 & 4.67 & 4.67 \\
& \multirow[t]{2}{*}{\# agents: 500} & random & 22.19~$\pm$~2.67 & \textbf{16.07~$\pm$~1.88} & \textbf{0.59~$\pm$~0.34} & 2.95~$\pm$~0.62 & 4.67 & \textbf{4.70} \\
&  & speed & 21.17~$\pm$~2.65 & \textbf{10.14~$\pm$~1.25} & \textbf{0.61~$\pm$~0.35} & 6.81~$\pm$~1.76 & \textbf{4.67} & 4.63 \\
\midrule
\multirow[b!]{2}{*}{\includegraphics[width=1cm]{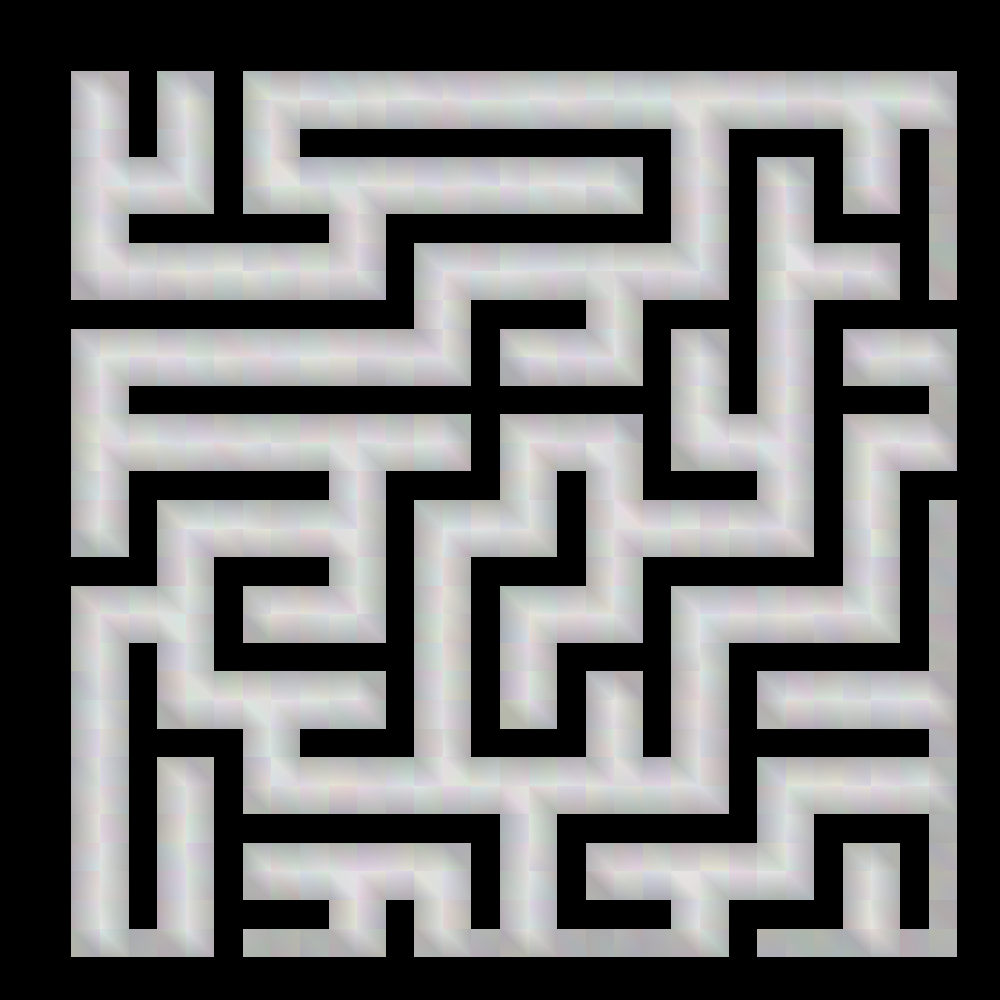}}& maze-32-32-2 & directed & 8.99~$\pm$~1.25 & \textbf{8.55~$\pm$~1.23} & \textbf{0.01~$\pm$~0.01} & 1.30~$\pm$~0.84 & 0.40 & 0.40 \\
& \multirow[t]{2}{*}{\# agents: 30} & random & 7.95~$\pm$~1.02 & \textbf{7.59~$\pm$~1.05} & \textbf{0.01~$\pm$~0.01} & 0.09~$\pm$~0.04 & 0.40 & 0.40 \\
&  & speed & 9.09~$\pm$~1.85 & \textbf{8.50~$\pm$~1.73} & \textbf{0.01~$\pm$~0.01} & 5.20~$\pm$~4.62 & 0.40 & 0.40 \\
\midrule
\multirow[b!]{2}{*}{\includegraphics[width=1cm]{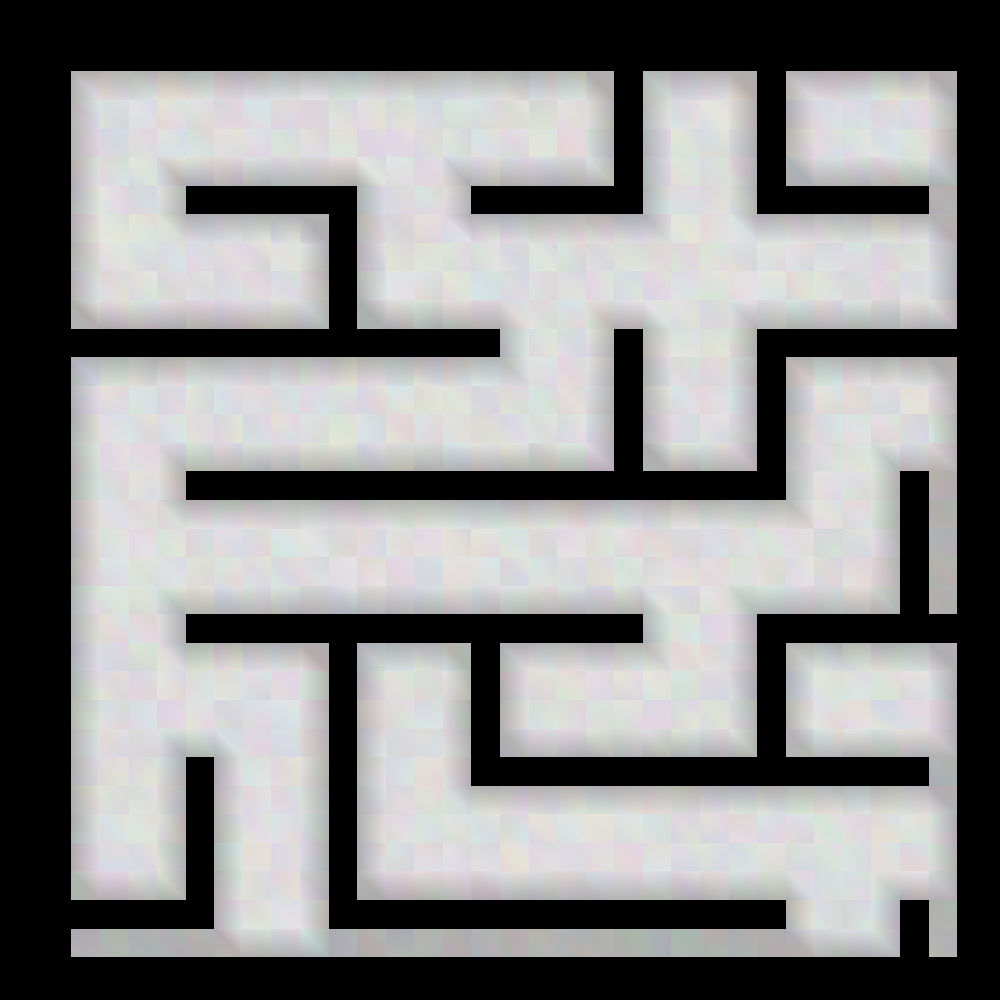}}& maze-32-32-4 & directed & 5.80~$\pm$~1.46 & \textbf{4.76~$\pm$~1.43} & \textbf{0.26~$\pm$~2.00} & 9.19~$\pm$~7.97 & 0.43 & 0.43 \\
& \multirow[t]{2}{*}{\# agents: 30} & random & 5.69~$\pm$~1.24 & \textbf{4.75~$\pm$~1.19} & \textbf{0.26~$\pm$~2.00} & 0.78~$\pm$~2.79 & \textbf{0.43} & 0.42 \\
&  & speed & 5.76~$\pm$~1.85 & \textbf{4.87~$\pm$~1.27} & \textbf{0.26~$\pm$~2.00} & 10.47~$\pm$~7.67 & \textbf{0.43} & 0.42 \\
\midrule
\multirow[b!]{2}{*}{\includegraphics[width=1cm]{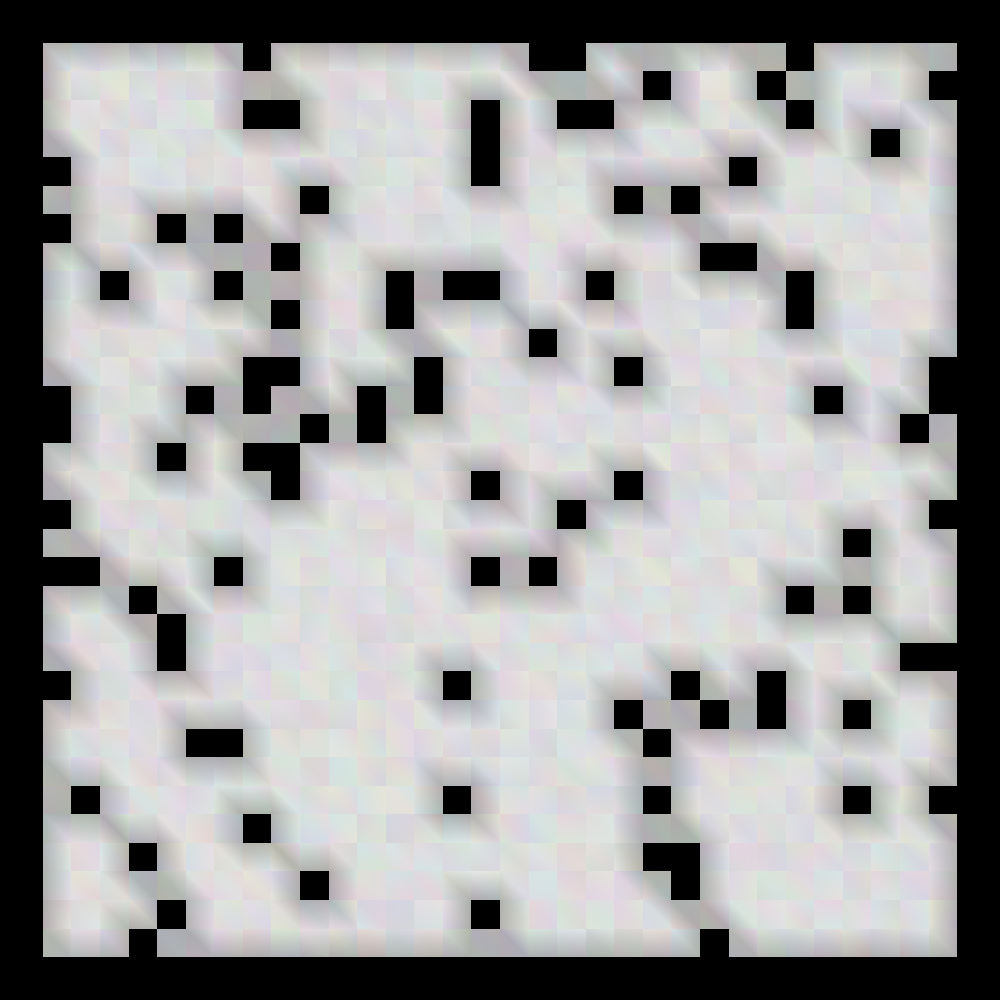}}& random-32-32-10 & directed & 12.19~$\pm$~0.82 & \textbf{8.78~$\pm$~0.78} & \textbf{1.08~$\pm$~0.29} & 7.51~$\pm$~4.23 & \textbf{7.27} & 7.09 \\
& \multirow[t]{2}{*}{\# agents: 300} & random & 15.43~$\pm$~1.12 & \textbf{14.71~$\pm$~1.10} & \textbf{1.09~$\pm$~0.29} & 1.46~$\pm$~0.90 & 7.27 & \textbf{7.30} \\
&  & speed & 10.85~$\pm$~0.89 & \textbf{8.10~$\pm$~0.86} & \textbf{1.08~$\pm$~0.29} & 10.89~$\pm$~6.16 & \textbf{7.27} & 6.28 \\
\midrule
\multirow[b!]{2}{*}{\includegraphics[width=1cm]{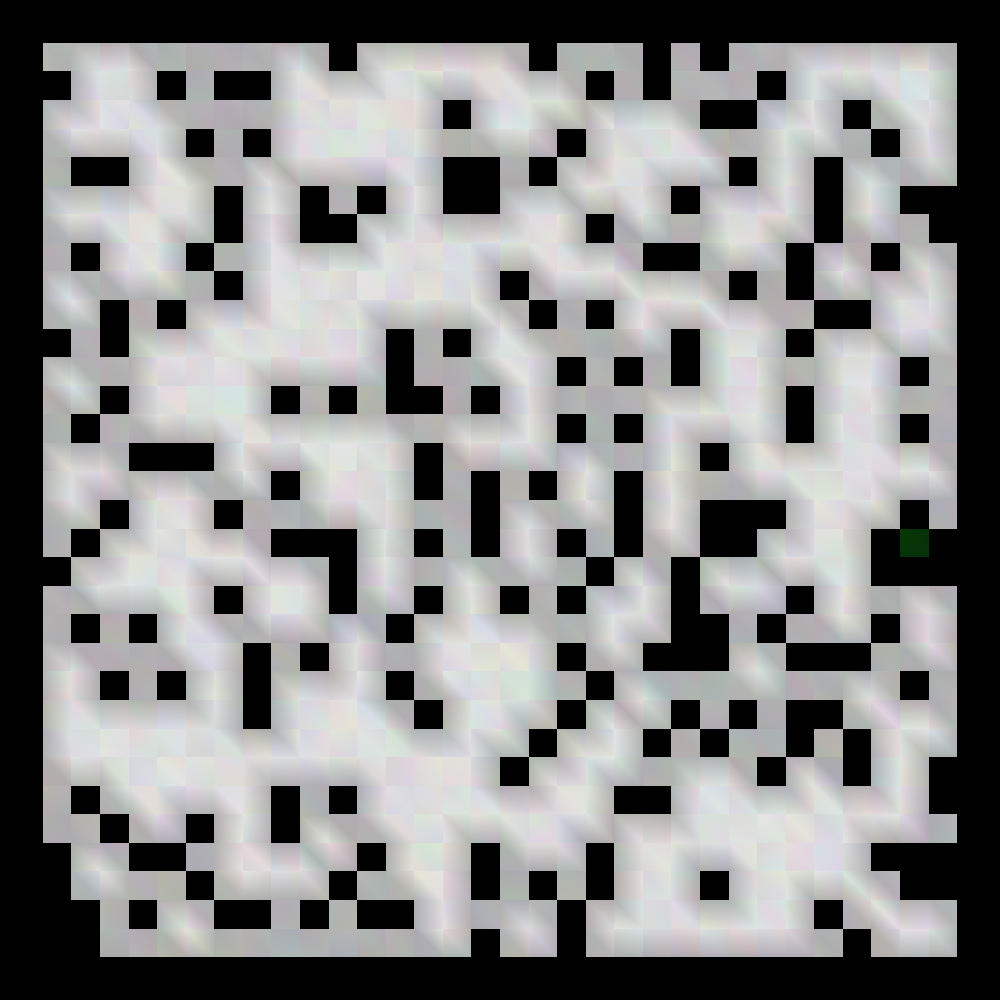}}& random-32-32-20 & directed & 10.09~$\pm$~1.02 & \textbf{6.74~$\pm$~0.72} & \textbf{0.69~$\pm$~0.31} & 1.96~$\pm$~0.93 & 4.62 & \textbf{4.63} \\
& \multirow[t]{2}{*}{\# agents: 200} & random & 12.66~$\pm$~1.08 & \textbf{12.21~$\pm$~1.02} & 0.69~$\pm$~0.31 & \textbf{0.65~$\pm$~0.26} & \textbf{4.62} & 4.57 \\
&  & speed & 9.15~$\pm$~0.97 & \textbf{6.99~$\pm$~0.85} & \textbf{0.69~$\pm$~0.31} & 2.45~$\pm$~2.14 & 4.62 & \textbf{4.64} \\

\bottomrule
\end{tabularx}
\caption{
Results from running (FA-)RHCR on different benchmark maps for a specific number of MAPF agents. The results compare the relevant metrics for RHCR with and without flow-awareness. Mean and standard deviation are computed across the different RHCR iterations. . 
}
\label{table:maptable}
\end{table*}
}

\title{Conflict Mitigation in Shared Environments \\ using Flow-Aware Multi-Agent Path Finding}
\author{Lukas Heuer${}^1$, Yufei Zhu${}^1$, Luigi Palmieri${}^2$, Andrey Rudenko${}^3$, Anna Mannucci${}^{2}$,\\ Sven Koenig${}^4$ and Martin Magnusson${}^1$ %<-this % stops a space
\thanks{\raggedright{$^{1}$L.\,Heuer, Y. Zhu and M.\,Magnusson - Örebro University
       {\tt\footnotesize  \{lukas.heuer, yufei.zhu, martin.magnusson\}@oru.se}}}% 
\thanks{\raggedright{$^{2}$L.\,Palmieri, A.\,Rudenko and A.\,Mannucci - Bosch Research
       {\tt\footnotesize  \{luigi.palmieri, anna.mannucci\}@de.bosch.com}}}%
\thanks{\raggedright{$^{3}$A.\,Rudenko - Technical University of Munich
       {\tt\footnotesize  andrey.rudenko@tum.de}}}%
\thanks{$^{4}$S. Koenig - University of California, Irvine
       {\tt\footnotesize sven.koenig@uci.edu}}%
}

\begin{document}

\maketitle

\begin{abstract}
Deploying multi-robot systems in environments shared with dynamic and uncontrollable agents presents significant challenges, especially for large robot fleets.
In such environments, individual robot operations can be delayed due to unforeseen conflicts with uncontrollable agents.
While existing research primarily focuses on preserving the completeness of Multi-Agent Path Finding (MAPF) solutions considering delays, there is limited emphasis on utilizing additional environmental information to enhance solution quality in the presence of other dynamic agents.
To this end, we propose Flow-Aware Multi-Agent Path Finding (FA-MAPF), a novel framework that integrates learned motion patterns of uncontrollable agents into centralized MAPF algorithms.
Our evaluation, conducted on a diverse set of benchmark maps with simulated uncontrollable agents and on a real-world map with recorded human trajectories, demonstrates the effectiveness of FA-MAPF compared to state-of-the-art baselines.
The experimental results show that FA-MAPF can consistently reduce conflicts with uncontrollable agents, up to 55\%, without compromising task efficiency.
\end{abstract}

\section{Introduction}\label{section:introduction}
% %Motivation
Multi-robot systems have demonstrated their effectiveness across various industries, including warehouse automation, intralogistics, and manufacturing \cite{roboticstomorrow}. 
Nevertheless, their current deployment is typically restricted to controlled environments where they can operate autonomously, isolated from human interactions and other unpredictable variables.
Recent advancements in robotics application suggest that overcoming this limitation is crucial for expanding the potential applications of multi-robot systems, allowing greater flexibility and adaptability in diverse settings \cite{proteus2024amazon}, including environments which are shared with uncontrollable agents.
Uncontrollable agents present a significant challenge when deploying multi-robot systems, as unpredictable conflicts require robots to respond in real time \cite{alkazzi2024comprehensive}. This poses a significant issue for MAPF-based coordination algorithms which typically rely on synchronous execution and generally cannot re-plan paths of individual agents separately. 

The problem of execution delays is widely acknowledged within the MAPF community, and various approaches have been developed to enhance the robustness of MAPF solutions in this regard~\cite{honig2019persistent, atzmon2020probabilistic,berndt2023receding,phan2025anytime}.
However, there is little work that investigates the mitigation of delays itself. 
We hypothesize that even a partial reduction of conflicts with uncontrollable agents can help when deploying multi-robot fleet using centralized MAPF algorithms.
While this work does not focus on improving the scalability of MAPF Algorithms, it may unlock applications which are, until now, not suited to deploy centrally coordinated multi-robot systems. 

Neither humans nor artificial dynamic agents move randomly. 
In real-world environments (shopping malls, airports, warehouses or workshops, etc.) dynamic agents will generally follow well recognizable motion patterns.
These motion patterns emerge because of the environment's topology, social norms and/or internal movement and behavior rules of the agents.
This idea is formalized with the concept of Maps of Dynamics (MoDs)~\cite{kucner2023survey}, which encode spatial or spatio-temporal motion patterns as features of the environment. 
MoDs capture structured and multimodal motion patterns and can be constructed from historical data or learned online.  
These patterns provide strong priors for predicting agent behavior into the future and over extended time horizons~\cite{zhu2023cliff}.

With the goal of improving the quality of paths generated by centralized MAPF algorithms in uncertain and dynamic environments, we propose Flow-Aware Multi-Agent Path Finding (FA-MAPF). 
The primary contribution in this context is the use of learned motion patterns of dynamic entities in a given environment, such as MoDs, in order to reduce conflicts between uncontrollable agents and the multi-robot system. 
To achieve this, we integrate multi-modal probability distributions, representing spatio-temporal motion patterns, into search-based state-of-the-art MAPF algorithms. Importantly, FA-MAPF strives for flow awareness implicitly at planning time and thus does not require ad-hoc replanning of specific agents or active sensing and detection of uncontrollable agents. Moreover, it retains most of the theoretical guarantees offered by centralized MAPF algorithms.
We employ the idea of guidance, as proposed by Zhang et al.~\citet{zhang2024guidance}, and compute edge-weights using a cost function that matches an agent’s possible actions with the local expected movement of a potential uncontrollable agent. 

Using established MAPF benchmarks~\cite{stern2019mapf}, we evaluate the effect of flow awareness on both the efficiency of MAPF algorithms and the number of conflicts with uncontrollable agents that can be expected. Furthermore, we assess the impact of flow awareness in lifelong MAPF settings in terms of runtime, throughput, and conflicts with uncontrollable agents, considering both benchmark maps and a real-world dataset. Our results indicate a clear trade-off between increased runtime and reduced conflicts with uncontrollable agents, while throughput remains largely unaffected.

\section{Preliminaries and Related Work}
In this section, we introduce the concepts that FA-MAPF is based on: (1) Multi-Agent Path Finding, (2) Maps of Dynamics and (3) Guidance, and overview the related work.
\subsection{Multi-Agent Path Finding} \label{ssec:mapf}
We adopt their notion of a \textit{classical} MAPF problem, as given by Stern et al.~\citet{stern2019mapf}, which is defined as an undirected graph $G=(V,E)$ of vertices $V$ and edges $E$, and a set of agents $K$, each associated with a start ($s_k$) and target vertex ($t_k$), i.e., $s_k, t_k \in V, ~\forall k \in K$.
Time is discretized, and there exists a set of actions $A$ from which each agent can perform one action per timestep.
An action $a \in A$ either moves the agent along an edge from its current vertex $v \in V$ to a new vertex $v'$ or causes the agent to wait in place, in which case $v=v'$. 
This relation can be formalized with a transition function $l:(V,A) \mapsto V$ such that $l(v, a) = v': v, v' \in V, a \in A$.
A path is defined as a sequence of actions $\psi = (a_0, \cdots, a_n)$. 
We call $\psi$ a valid path for agent $k$ if and only if, starting at vertex $s_k$, the execution of the actions in $\psi$ results in being in vertex $t_k$.
A solution to the MAPF problem is a set of valid, non-conflicting paths, one for each agent. This means that no agent occupies the same vertex or traverses the same edge at the same timestep.
Ma et al.~\citet{ma2017lifelong} propose a lifelong extension to the MAPF problem in which the agents are provided with a new goal anytime they reach their current one and the MAPF problem is re-instantiated after a set amount of timesteps. 

\subsubsection{Uncontrollable Agents in MAPF}
Uncontrollable or external agents that can interact with and cause delays to individual agents are a notable problem for multi-agent systems \cite{alkazzi2024comprehensive}. While there exists a lot of work in this direction for single agent path planning, to our knowledge the only work to tackle this in the context of centrally coordinated multi-agent systems is by Bonalumi et al.~\citet{bonalumi2025multi}. They propose Multi-Agent Pickup and Delivery with External Agents (MAPD-EA), including an extension to the Token Passing (TP) Algorithm \cite{ma2017lifelong} called Token Passing with collision avoidance and replanning (TP-CA). 
Unfortunately, we were not able to obtain TP-CAs implementation, which is why we can not provide a direct comparison to our method.

While we consider a similar problem formulation, FA-MAPF does not require a well-formed MAPD problem and thus integrates naturally with state-of-the-art (lifelong) MAPF solvers like Explicit Estimation Conflict Based Search (EECBS) \cite{li2021eecbs} and Rolling-Horizon Collision Resolution (RHCR) \cite{li2021lifelong}.
FA-MAPF also differs from TP-CA methodologically as we do not compute explicit occupancy likelihoods and transition probabilities, tied to the explicit detection of uncontrollable agents at planning time.
Instead, FA-MAPF utilizes MoDs to incorporate local motion patterns into the environment model, guiding the planning process to reduce the probability of conflict with uncontrollable agents.

\subsubsection{Robust MAPF}
Several approaches have been developed to enhance the robustness of MAPF schedules. 
Atzmon et al.~\citet{atzmon2020probabilistic} show how a MAPF algorithm can be made resilient to delays spanning multiple timesteps. 
Phan et al.~\citet{phan2025anytime} use agent delays as a basis for the destroy heuristic in MAPF algorithms based on Large Neighbourhood Search (LNS).
Hönig et al.~\citet{honig2019persistent} and Berndt et al.~\citet{berndt2023receding} achieve a certain degree of resilience to execution delays by capturing and enforcing the precedence relationships among robot actions of the original MAPF solution. This refinement ensures completeness and robust execution even in the presence of unknown or time-varying delays.

Unlike these works, our method does not focus on making MAPF solutions more resilient to execution delays. Instead, it aims to reduce interactions with uncontrollable agents that will cause delays, thereby reduce the required robustness of the MAPF solution in the first place.
Importantly, FA-MAPF is a complementary to robust MAPF and not substitutive.

\subsection{Maps of Dynamics}\label{ssec:mods}

Maps of Dynamics (MoDs) encode spatial and spatio-temporal motion patterns as features of the environment. In this work, we exploit the Circular-Linear Flow Field (CLiFF) Map~\cite{kucner2017enabling}, an MoD representation that models local motion patterns as continuous, multimodal distributions over velocity. CLiFF-maps support both offline construction from past observations and online lifelong updates~\cite{zhu2025cliffonline}.

CLiFF-maps are built from observations of agent motion and assign a probability distribution over velocities to each spatial location. 
These velocities are represented as direction ($\theta$) and speed ($\rho$), jointly as $\mathbf{u} = [\theta, \rho]^{\mathsf{T}}$, where $\theta \in [0,2\pi)$ and $\rho \in \mathbb{R}^+$.

To capture the joint distribution of $\theta$ and $\rho$, CLiFF-maps use \emph{Semi-Wrapped} Gaussian Mixture Models (SWGMMs), in which the circular variable $\theta$ is wrapped around the unit circle and $\rho$ is linear. Each SWGMM is composed of Semi-Wrapped Normal Distributions (SWNDs), allowing the model to represent multimodal motion patterns.

An SWND $\mathcal{N}^{SW}_{\boldsymbol{\Sigma}, \boldsymbol{\mu}}$ is formally defined as 
\begin{equation}
\mathcal{N}^{SW}_{\boldsymbol{\Sigma}, \boldsymbol{\mu}}(\mathbf{u}) = \sum_{m \in \mathbb{Z}} \mathcal{N}_{\boldsymbol{\Sigma}, \boldsymbol{\mu}}( [\theta,  \rho]^{\mathsf{T}} + 2\pi [ m,  0 ]^{\mathsf{T}} ),
\end{equation}
where $\boldsymbol{\Sigma}$ is the covariance matrix, $\boldsymbol{\mu}$ is the mean value of the directional velocity $(\theta, \rho)^{\mathsf{T}}$, and $m$ is a winding number that enumerates overlapping parts of the wrapped $\theta$ variable. 

An SWGMM is defined as a weighted sum of $J$ SWNDs: 
\begin{equation}\label{eq:swgmm}
p(\mathbf{u} | \mathbf{\xi}) = \sum_{j=1}^{J}\beta_{j}\mathcal{N}^{SW}_{\boldsymbol{\Sigma_j}, \boldsymbol{\mu_j}}(\mathbf{u}), 
\end{equation}
where $\boldsymbol{\xi} = \{ \xi_j = (\boldsymbol{\mu}_j, \boldsymbol{\Sigma}_j, \beta_j) | j \in \mathbb{Z}^+ \}$ denotes the finite set of mixture parameters, and $\beta_j$ denotes the mixing factor which satisfies $0 \leq \beta_j \leq 1$.

CLiFF-map have been applied in various domains, such as task and motion planning and trajectory prediction. Prior work by Palmieri et al.~\citet{palmieri2017kinodynamic} and Swaminathan et al.~\citet{swaminathan2018down} investigates how MoDs can be used for single robot motion planning. Furthermore, Liu et al.~\citet{liu2023human} demonstrate how human flow can be incorporated into hierarchical reinforcement learning for task planning.
Lastly, Zhu et al.~\citet{zhu2023cliff} explores how MoDs can be used to predict the motion trajectories of individual humans.

\subsection{Guidance in MAPF}
In FA-MAPF, we integrate learned motion patterns into MAPF algorithms by adapting the transition costs in the MAPF graph $G$ to capture these patterns.
This approach, recently formalized and termed \textit{guidance} by Zhang et al.~\citet{zhang2024guidance} has been comprehensively surveyed in prior work, including \citet{zang2025online, chen2024traffic, zhang2024guidance, cohen2016improved, jansen2008direction, wang2008fast}. While most existing approaches apply guidance primarily to improve the runtime efficiency or throughput of multi-agent systems, FA-MAPF distinguishes itself by focusing on minimizing interactions with uncontrollable agents. We achieve this by modeling their motion patterns and using these patterns to direct the MAPF algorithm.

To formalize the integration of guidance into FA-MAPF, we adopt the framework of a guidance graph. Specifically, we assume a directed guidance graph $G_g = (V_g, E_g, \boldsymbol{\omega})$, as described by Zhang et al.~\citet{zhang2024guidance}, where the vertex set is identical to that of the original MAPF graph (i.e., $V = V_g$) and $E_g$ includes edges representing all possible transitions available to a MAPF agent in $G$. The vector $\boldsymbol{\omega}$ encodes the weights assigned to each edge, with $\omega_e$ denoting the specific cost associated with edge $e \in E_g$. This formalism allows us to encapsulate motion patterns into the computation of $\boldsymbol{\omega}$ as explained in Section \ref{sec:method}.

\section{Flow-Aware Multi-Agent Path Finding}\label{sec:method}
This section presents the core contribution of our paper: a method for accounting of learned motion patterns of uncontrollable agents when solving a MAPF problem.

\subsection{Problem Formulation} \label{ssec:problem}

We consider a potentially lifelong MAPF problem as described in Section~\ref{ssec:mapf} to coordinate a multi-robot system.
We consider uncontrollable agents (UAs) that operate in the same environment as the agents controlled by the MAPF algorithm, i.e. MAPF agents. 
A UA is defined through its trajectory $\boldsymbol{\zeta} = (\mathbf{x}_0, \mathbf{x}_1, \dots, \mathbf{x}_\zeta)$ with Cartesian positions $\mathbf{x}_i \in \mathbb{R}^2$.

Assume two sets of trajectories of UAs: $H = (\boldsymbol{\zeta}_0, \boldsymbol{\zeta}_1, \dots, \boldsymbol{\zeta}_H)$ as an observed dataset of trajectories in the operational environment and $F= (\boldsymbol{\zeta}_0, \boldsymbol{\zeta}_1, \dots, \boldsymbol{\zeta}_F)$ as the trajectories of the UAs that are present when the MAPF solution is executed.
Importantly, we assume knowledge about $H$ and consider $F$ to be unknown, even at planning time.

A conflict between a MAPF agent and a UA occurs if, at any time, the Euclidean distance between them is smaller than the sum of their radii. 
We now aim to find a solution to the MAPF problem that minimizes conflicts between the paths in the MAPF solution and the trajectories in $F$.

\subsection{Method}
Consider an environment in Euclidean space $W$, represented by a graph $G = (V,E)$ (as described in Section~\ref{ssec:mapf}), a transformation $T_{W\mapsto V}$, and the set of possible MAPF agent actions $A$. 
A dataset $H$ of observed trajectories of uncontrollable agents is used to construct the MoD $M$, which encodes relevant motion patterns in environment $W$.

\directioncost
The core idea of FA-MAPF is to assign a cost-value to account for the alignment between an agent's action $a \in A$ with the motion pattern $M(v)$, at a given location. 
Formally, let $q: (W, A, M) \mapsto \mathbb{R}_{\geq 0}$ be a function which maps a position in the environment $W$, an action from the possible set of actions $A$, and an MoD $M$, to a cost-value. 
Using $T_{W\mapsto V}$ to map positions to vertices, we can query the MoD at the position of a vertex $v \in V$ to obtain $M(v)$, and define the flow cost $g_f(a, M(v))$ whose explicit calculation is explained in Section~\ref{subsection:cost_formulation}.
Using $g_f$ we then compute the edge weights in a guidance graph $G_g = (V_g, E_g, \boldsymbol{\omega})$ which FA-MAPF uses to plan the paths of the individual MAPF agents. 
The individual edge weights are computed as
\begin{equation} \label{eq:edgeweight}
    \omega_e = g_s + g_f\left(a, M\left(v\right)\right),~\forall~e \in E_g,
\end{equation}
where $v$ is the source vertex of $e$, $a$ is the action taken to follow the edge $e$, $M(v)$ is the SWGMM obtained from the CLiFF-map at the location $v$, and $g_s$ is the step-cost defined by the MAPF algorithm cost-function.
Fig.~\ref{fig:direction_cost} illustrates an example guidance graph on a map used in our experiments.

Because FA-MAPF does not require real-time information at planning time, the guidance graph and an admissible heuristic (including the flow cost) can be pre-computed by finding shortest paths in $G_g$ between all pairs of vertices.

\subsection{Flow Awareness}\label{subsection:cost_formulation}

To compute $g_f$, the flow cost, we use the Mahalanobis distance which measures the distance between a multivariate Gaussian distribution and a vector~\cite{mahalanobis1936generalized}. 
The Mahalanobis distance can be extended to Gaussian mixtures by computing the weighted sum over the mixture components. 
A similar formulation has been proposed for sampling-based planning~\cite{swaminathan2018down}.

Using this approach to measure the distance between an action $a$ and an SWGMM $M(v)$, obtained from a CLiFF map, requires $a$ to match the vector space of $M(v)$.
That is, $a$ should also be a velocity vector $\mathbf{u}=[\theta, \rho]^{\mathsf{T}}$.
When implementing a MAPF algorithm on a 4-connected grid, we consider $A = \{a_{\mathrm{right}}, a_{\mathrm{up}}, a_{\mathrm{left}}, a_{\mathrm{down}}, a_{\mathrm{wait}}\}$ as the set of possible actions, and a direct mapping $\alpha:A \mapsto A'$.
With $A' =\{[0, 1]^{\mathsf{T}}, [\frac{\pi}{2}, 1]^{\mathsf{T}}, [\pi, 1]^{\mathsf{T}}, [\frac{3\cdot\pi}{2}, 1]^{\mathsf{T}}, [-, 0]^{\mathsf{T}}\}$, we can use $a \in A$ and $\mathbf{a}' \in A'$ interchangeably. 

Given an SWGMM as described in Equation~\ref{eq:swgmm}, and an action $\mathbf{a}' \in A'$, we now compute
\begin{multline}\label{eq:malahanobis_distance}
    g_f\left(a, M\left(v\right)\right):=\\
    \log\gamma\sum_{j=0}^J\left(\beta_j\sqrt{\left(\boldsymbol{\mu}_j - \mathbf{a}'\right)^{\mathsf{T}}\boldsymbol{\Sigma}_j^{-1}\left(\boldsymbol{\mu}_j - \mathbf{a}'\right)}\right),
\end{multline}
where $\gamma$ is the number of UA observations made at location $v$.
Thus the sum represents the actions alignment with the underlying SWGMM and the logarithmic term scales that value with likelihood that an UA will be encountered in this location in the first place. 
Importantly, we compute the subtraction between two angles (i.e. $\boldsymbol{\mu}_\theta$ and $\mathbf{a}'_\theta$) as the shortest positive angular distance around the unit circle. 
After computing $g_f$ for all transitions in the map, we employ min-max normalization to scale the cost values to the range $[0, 1]$.

\modcost
Fig.~\ref{fig:mod_cost} exemplary illustrates a SWGMM; the way it relates to the actions $\mathbf{a}' \in A'$; and the values of $g_f$ for the corresponding actions $a \in A$.

\subsubsection*{Implementation assumptions}
Applying flow cost in MAPF is based on two key assumptions.

First, as MAPF algorithms generally work with unit time, where one action takes one timestep, we assume that agents move with a constant speed. 
In accordance with the definition of $A'$, we set the agent's speed to $1\frac{\text{m}}{\text{s}}$, which is slightly below the average human walking speed of $1.42 \frac{\text{m}}{\text{s}}$ \cite{browning2006effects}. 

Second, the wait actions $a_{\mathrm{wait}}$ lacks a defined movement angle, preventing direct computation of $g_f$ for this action. 
To resolve this, we define $g_f(a_\mathrm{wait})$ for each $M(v)$ as the mean value of $g_f$ the other four actions, with $\rho$ set to zero.

\subsection{Completeness and Optimality}
The completeness and (sub-)optimality properties of FA-MAPF depend on the underlying MAPF algorithm used. Specifically, when employing a complete and (sub-)optimal search-based MAPF algorithm, such as ECBS \cite{barer2014suboptimal}, FA-MAPF inherits these properties. Completeness is preserved because FA-MAPF only adds a finite positive value to the cost of each edge transition in the graph. When $g_f \in [0, 1]$, Equation~\ref{eq:edgeweight} shows that $g(v) < g(v') ~\forall~ v, v' \in V$. In other words, all edge costs remain positive and the cost strictly increases as the agent progresses, ensuring that the search terminates and all solutions are eventually found, that is, the search remains exhaustive and complete~\cite{russell2016artificial}.

If the underlying MAPF algorithm uses an admissible heuristic, FA-MAPF also maintains the (sub-)optimality guarantee (up to the suboptimality bound for algorithms like ECBS), though now with respect to the total cost on the guidance graph $G_g$ which includes the adapted edge weights.
However, with a simple assumption, this guarantee can be extended to only the path length of the solution.

\subsubsection*{Bounded (sub-)optimality by the optimal path length}
The cost of any path $\psi$ in a solution returned by a $\omega_1$-bounded suboptimal FA-MAPF algorithm, as described in Section~\ref{sec:method}, is composed of two terms: the path-length cost term $\textstyle\sum_{\psi}g_s$ and the flow cost term $\textstyle\sum_{\psi}g_f$ (we generally omit function arguments here for better readability).
Consider a path $\psi^*$, with cost $\textstyle\sum_{\psi^*}g_s$, to be optimal.
We prove that the cost of $\psi$, $\textstyle\sum_{\psi}g_s + \textstyle\sum_{\psi}g_f$, is bounded suboptimal with respect to $\textstyle\sum_{\psi^*}g_s$ under a simple condition.
\begin{remark}
    The path $\psi$ is $\omega_1$-bounded suboptimal w.r.t. the cost function $g_s + g_f$.
    The path $\psi'$ is optimal w.r.t. the cost function $g_s + g_f$.
    The path $\psi^*$ is optimal w.r.t. the cost function $g_s$.
\end{remark}

\begin{lemma}
    If $g_f$ is positive finite in the entire search space such that $\max_{a\in A, v \in V}(g_f(a, M(v))) \leq \omega_2~g_s$, then the cost of a $\omega_1$-bounded suboptimal path $\psi$, $\textstyle\sum_{\psi}g_s + \textstyle\sum_{\psi}g_f$, has the upper bound $(\omega_1 + \omega_1~\omega_2)\textstyle\sum_{\psi^*}g_s$ with $\omega_1 \geq 1$ and $\omega_2 \geq 0$. 
    %path-length cost $\sum g_s$ of a solution is bounded suboptimal with a factor of $\omega_1 + \omega_1~\omega_2$.
\end{lemma}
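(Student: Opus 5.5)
The argument is a chain of three inequalities linking the paths $\psi$, $\psi'$ and $\psi^*$ of the Remark. Throughout I write $C(\cdot) := \sum_{\cdot}(g_s + g_f)$ for the flow-aware cost on the guidance graph $G_g$, and I treat $g_s$ as a constant positive step cost, as in Equation~\ref{eq:edgeweight}.

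\textbf{Step 1 (suboptimality guarantee).} The guarantee of the underlying $\omega_1$-bounded suboptimal algorithm, applied to the edge weights $\omega_e$ of $G_g$, gives
\[
\textstyle\sum_{\psi}g_s + \sum_{\psi}g_f = C(\psi) \leq \omega_1\, C(\psi').
\]
This step needs the heuristic to remain admissible under the modified weights. The paper ensures this by precomputing shortest paths in $G_g$, so I take it as given from the preceding discussion of completeness and optimality.

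\textbf{Step 2 (exchange argument).} The path $\psi^*$ is a valid path, though possibly with larger flow cost. Since $\psi'$ minimizes $C$, we get $C(\psi') \leq C(\psi^*)$. For a multi-agent solution this should be read as comparing solutions rather than single paths. I would note that $\psi^*$ is optimal for $g_s$ among feasible (conflict-free) solutions, so it is feasible for the $C$-problem as well, and the inequality holds.

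\textbf{Step 3 (edge-wise bound).} Apply the hypothesis $\max_{a,v} g_f(a,M(v)) \leq \omega_2\, g_s$ to each edge of $\psi^*$. Summing over its edges gives $\sum_{\psi^*} g_f \leq \omega_2 \sum_{\psi^*} g_s$, hence
\[
C(\psi^*) \leq (1+\omega_2)\textstyle\sum_{\psi^*} g_s .
\]

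Chaining the three steps yields
\[
C(\psi) \leq \omega_1(1+\omega_2)\textstyle\sum_{\psi^*} g_s = (\omega_1 + \omega_1\omega_2)\textstyle\sum_{\psi^*} g_s .
\]
Positivity and finiteness of $g_f$ are used only so that the costs are well defined and the search terminates.

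The one point I expect to need care is Step 2. There I must check that feasibility, meaning conflict-freeness and validity, is independent of the edge weights, so that $\psi^*$ is indeed a candidate in the flow-aware optimization. I must also handle the wait-action flow cost, which is covered by the same maximum over $a\in A$. Everything else is routine summation.
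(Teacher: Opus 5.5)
Your proof is correct and follows the paper's argument: the paper chains the same three inequalities, namely $\omega_1$-bounded suboptimality with respect to $\psi'$, the optimality comparison $\sum_{\psi'}(g_s+g_f)\le\sum_{\psi^*}(g_s+g_f)$ multiplied through by $\omega_1$, and the edge-wise bound $\sum_{\psi^*}g_f\le\omega_2\sum_{\psi^*}g_s$. Your point that Step~2 should be read for whole conflict-free solutions, whose feasibility does not depend on the edge weights, is slightly more careful than the paper's closing appeal to ``summing over all paths.''
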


\begin{proof}
%Given an optimal FA-MAPF solution, its cost is composed of an optimal flow cost $(\textstyle\sum g_f)^*$ and an optimal path-length cost $(\textstyle\sum g_s)^*$.
Since $\psi$ is $\omega_1$ bounded suboptimal, the following relation between $\psi$ and an unknown optimal path $\psi'$ upholds:
\begin{equation}\label{eq:p1}
    \textstyle\sum_{\psi}g_s + \textstyle\sum_{\psi}g_f \leq \omega_1\left(\textstyle\sum_{\psi'}g_s + \textstyle\sum_{\psi'}g_f\right).
\end{equation}
From $\max_{a\in A, v \in V}(g_f(a, M(v))) \leq \omega_2~g_s$, we can derive the following equation, which is valid for any path $\psi^\#$: 
\begin{equation}\label{eq:p2}
    \textstyle\sum_{\psi^\#} g_f \leq \omega_2\textstyle\sum_{\psi^\#} g_s.
\end{equation}
This also includes $\psi^*$, so we can calculate the potential flow cost along $\psi^*$, and obtain
\begin{equation}\label{eq:p3}
    \textstyle\sum_{\psi^*}g_s + \sum_{\psi^*}g_f \leq \textstyle\sum_{\psi^*}g_s + \omega_2\textstyle\sum_{\psi^*}g_s.
\end{equation}
Multiplying with $\omega_1$ and rearranging yields:
\begin{equation}\label{eq:p4}
    \omega_1\left(\textstyle\sum_{\psi^*}g_s + \sum_{\psi^*}g_f\right) \leq (\omega_1 + \omega_1~\omega_2)\textstyle\sum_{\psi^*}g_s.
\end{equation}
As $\psi'$ is defined as the optimal for the cost function $g_s + g_f$, we can then write 
\begin{equation}\label{eq:p5}
    \omega_1\left(\textstyle\sum_{\psi'}g_s + \textstyle\sum_{\psi'}g_f\right) \leq \omega_1\left(\textstyle\sum_{\psi^*}g_s + \sum_{\psi^*}g_f\right).
\end{equation}
\noindent
Finally, using the transitivity of Equations~\ref{eq:p1},~\ref{eq:p5}, and~\ref{eq:p4} we obtain:
\begin{equation}\label{eq:p6}
    \textstyle\sum_{\psi}g_s + \textstyle\sum_{\psi}g_f \leq (\omega_1 + \omega_1~\omega_2)\textstyle\sum_{\psi^*}g_s.
\end{equation}
Thus, the cost of the $\omega_1$-bounded suboptimal path in the solution of a FA-MAPF problem has the upper bound $(\omega_1 + \omega_1\omega_2)\textstyle\sum_{\psi^*}g_s$. Summing over all paths extends this to the entire FA-MAPF solution.
\end{proof}

\modareas
\section{Evaluation}\label{sec:evaluation}
As our main objective is to reduce conflicts with uncontrollable agents (UAs), we focus our evaluation on how much such reduction we can expect on different maps and different types of motion, and how much it impacts computation runtime and task efficiency.

\subsubsection{Experiments}
In our first experiment, we use EECBS, with a suboptimality factor of $1.2$, to solve a standard MAPF problem. 
We specifically want to investigate the computational efficiency of FA-MAPF and use maps from the established benchmark by Stern et al.~\citet{stern2019mapf}.

The second experiment solves a lifelong MAPF problem using RHCR with ECBS, with a suboptimality factor of $1.5$, as a high-level solver and SIPP~\cite{phillips2011sipp} as a low-level solver. 
We refer to the flow-aware version of RHCR as FA-RHCR.
This experiment focuses on showing how much reduction in conflicts we can expect as well as the tradeoff in terms of runtime and efficiency. 
If not stated differently, RHCR is run with a simulation time, replanning window and conflict resolution horizon of $2000$, $20$, and $40$ timesteps respectively. 
The tasks for each agent are chosen randomly and to ensure reproducibility across methods, we use a predetermined task queue per map and agent.

All experiments are run on a Intel i7-12700K CPU with 32GB of RAM.

\subsubsection{Metrics}
In our experiment we evaluate several metrics. \textit{solved} is specific to the first experiment (Fig.~\ref{fig:oneshot}) and refers to the percentage of instances solved out of the 25 scenarios contained in the benchmark, within a time-limit of 5 seconds. 
Since the UAs are not directly considered at planning time they can not cause a MAPF problem instance to become infeasible. Therefore a reduction in the \textit{solved} metric is always attributed to a worse runtime-performance and not collisions between UAs and MAPF agents.

We evaluate algorithm \textit{runtime} directly for both experiments. In the first experiment it refers to the time EECBS takes to find a solution. In the second experiment it refers to the average solving time per RHCR iteration. 
To asses FA-MAPF's impact on the efficiency we provide \textit{throughput} as the average number of tasks completed per timestep. 
\textit{UA Conflicts} is the average number of conflicts between MAPF agents and UAs per timestep. 
As described in Section \ref{ssec:problem} a conflict occurs if, at any one time, the Euclidean distance between a MAPF agent and a UA is less than the sum of their radii (each radii set to $0.3$~m). 

\oneshot
\maptable

\subsubsection{Maps}
We perform experiments on 8 different maps, namely \textit{empty-32-32}, \textit{random-32-32-10}, \textit{random-32-32-20}, \textit{maze-32-32-2}, \textit{maze-32-32-4}, \textit{ht\_chantry}, \textit{den312d} and \textit{warehouse-10-20-10-2-1} \cite{stern2019mapf}.
In addition we run an experiment on the ATC dataset map~\cite{brvsvcic2013person} which allows us to test our approach against real-world recorded human motion data. On all maps, one grid-cell corresponds to $1 \times 1$ meters.

\subsubsection{Uncontrollable Agents}
We simulate UAs on the benchmark maps using SIPP with a branching factor of 5 to allow diagonal movement. The UAs move at a constant speed of $1\frac{\text{m}}{\text{s}}$ and to not consider inter-agent collisions.

We hypothesize that the way UAs move affects how well MoDs can help in avoiding conflicts. For instance agents moving randomly in the environment are much harder to synthesize into specific movement patterns then agents that stream between to specific map regions. 
We consider 3 types of UA movement, defined by how the start and goal positions of the UAs are chosen:
\textit{random} refers to the start and goal positions being chosen randomly on the map.
\textit{directed} refers to the start and goal location being sampled from dedicated areas such that clear movement patterns are enforced. 
\textit{speed} refers to the start and goal locations being sampled from dedicated areas, with some areas being associated with twice the movement speed of others.
For each map and movement type we sample 10,000 trajectories, which are used to compute the MoDs. 

For the first experiment, on each map, we consider 100 UAs starting at $t=0$ and moving towards their respective goal states, vanishing upon arrival.
For the second experiment we consider one UA to appear every timestep, follows a trajectory based on the movement type, and vanish at its goal location. This ensures there are UAs moving continuously through the environment.

Fig.~\ref{fig:mod_areas} shows the start and goal areas on the \textit{den312d}-map for \textit{directed} UA movement, along with its CLiFF-map visualization.

\section{Results and Discussion}
In the following we present our experimental results. 
\subsubsection{Experiment 1}
Fig.~\ref{fig:oneshot} shows the results obtained from the first experiment, where we evaluate the scalability of FA-MAPF by solving one-shot EECBS on the different benchmark maps.
MAPF without flow cost is more computationally efficient and typically maintains 100\% solved rate (within 5 s) for larger numbers of agents compared to FA-MAPF. Equivalently, the runtime starts to increase later.
However, the extent to which FA-MAPF scales worse, seems to depend on the type of map. That \textit{den312d} and \textit{ht\_chantry} scale more poorly than the random maps suggests that on maps with more \textit{structure} the guiding effect of FA-MAPF is stronger. The movement type of the UAs does not seem to have a general impact on the scalability in this case.

\numagentcomp
\windowscomp
\atcresults

\subsubsection{Experiment 2}
Table~\ref{table:maptable} presents results from the second experiment using RHCR.
The results show that FA-MAPF reduces \textit{UA conflicts} by up to 55\% compared to the baseline, in exchange for an increase in runtime.
As in Experiment 1, the increased \textit{runtime} is due to the non-uniform transition cost in the guidance graph, which causes the low-level search algorithm to expand more nodes and thus taking longer to find the final (sub-)optimal path for a MAPF agent. 
While \textit{throughput} is generally an important metric, it is not notably effected by using FA-MAPF.
Comparing the results across the different maps suggests that FA-MAPF excels on maps with a room-like structure (i.e. \textit{den312d} and \textit{ht\_chantry}) but is less suited for maze-like environments. 
We believe this is because, on maps with rooms, flow patterns emerge more clearly and it is easier for the MAPF agents to adapt their paths accordingly. In contrast, maze-like maps do not provide sufficient room to maneuver, limiting ability of MAPF agents to adjust to the motion patterns of the UAs.

We want to highlight that the \textit{runtime} in Fig.~\ref{fig:oneshot} and Table~\ref{table:maptable} are not directly comparable. 
As the first experiment uses one-shot EECBS and reports the number of unsolved instances, unsolved instances where the runtime limit is reached do not contribute to the metric calculation. This means that the reported values represent the runtime for the instances where a solution was eventually found within 5 seconds. 
In contrast to EECBS, RHCR handles infeasible iterations differently. 
Valid paths for the agents are still used, and only ones that are conflicting are frozen for the respective planning window. 
Thus infeasible iterations in RHCR are considered for the metric calculation, including the runtime which contributes to these iterations with its maximum value.

\subsubsection{Additional Evaluations}
Fig.~\ref{fig:num_agent_comp} and \ref{fig:windows_comp} show results on the \textit{den312d} map with \textit{directed} UA movement, varying the number of agents and the replanning horizon. %for a varying number of agents and replanning-horizon respectively. 
It is worth noting that, for larger numbers of agents, FA-MAPF seems to dampen the drop-off in \textit{throughput}. 
We cannot make a definitive claim as to why this is, but it suggest that FA-MAPF produces more feasible paths and live agents even in situations where a full solution can not be found in time.
\warehouse
\warehousecc

Importantly, we also evaluate how FA-MAPF works with real-world human motion data. We run RHCR with 40 MAPF agents for 33000 timesteps, i.e. 9 hours, on a grid-map representing the ATC shopping mall. For evaluation we replay the first day of the ATC-dataset~\cite{brvsvcic2013person} to compute conflicts of the MAPF solutions with the trajectories in the dataset.
Fig.~\ref{fig:atc_results} shows that in real-world scenarios, FA-MAPF is able to consistently reduce \textit{UA conflicts} by a notable margin, at the cost of an increase in runtime.

We also investigate how FA-MAPF works in warehouse-like environments. 
In order to do so, we consider two settings, whose evaluation confirms the strengths and weaknesses of FA-MAPF we identified in the other experiments.
Firstly, Fig.~\ref{fig:warehouse} shows results where UAs use the shortest path to their goal without restriction. This causes corridors to be traversed in both directions and no clear flow-pattern to emerge, hence FA-MAPF is unable to improve \textit{UA conflicts} in this case.
In a second case shown in Fig.~\ref{fig:warehouse_crisscross}, we impose highway patterns \cite{li2023study}, only allowing the UAs to traverse corridors in one direction. The allowed directions alternate from top to bottom and left to right.
The results show that FA-MAPF is successfully able to infer the highway patterns from the MoD, significantly reducing \textit{UA conflicts.}

\section{Conclusion and Future Research}\label{section:conclusion}
We propose a novel method, Flow-Aware MAPF, to incorporate statistical learned information about motion patterns of uncontrollable agents into centralized MAPF.
Our evaluation demonstrates that FA-MAPF significantly reduces conflicts with uncontrollable agents, reducing the need for low-level collision avoidance mechanisms to repair local paths.

The concept of FA-MAPF establishes a foundation for several research directions:
\textbf{(1)} Integrating flow awareness into large-scale MAPF algorithms such as PIBT \cite{okumura2022priority} or LaCAM \cite{okumura2023lacam}.
\textbf{(2)} Investigating the impact of conflicts with uncontrollable agents on efficiency metrics of a coordinated multi-robot system.
\textbf{(3)} Extending FA-MAPF to incorperate time-dependent Maps of Dynamics \cite{zhu2025cliffonline}.

In conclusion, this paper lays groundwork for transitioning multi-robot systems from fully controlled spaces with no external disturbances to more challenging and interactive environments.

\bibliography{aaai2026}

\end{document}